\def\eqref#1{equation~\ref{#1}}
\def\1{\bm{1}}
\DeclareMathAlphabet{\mathsfit}{\encodingdefault}{\sfdefault}{m}{sl}
\SetMathAlphabet{\mathsfit}{bold}{\encodingdefault}{\sfdefault}{bx}{n}
\title{CausalAdv: Adversarial Robustness through the Lens of Causality}
\author{\hspace{-1mm}
Yonggang Zhang$^{1,2}$ \quad Mingming Gong$^3$ \quad
Tongliang Liu$^4$ \quad Gang Niu$^5$ \quad
Xinmei Tian$^1$ \quad \\ 
\bf Bo Han$^{2,\dagger}$ \quad
Bernhard Schölkopf$^6$ \quad Kun Zhang$^{7,8}$
\\
	$^1$University of Science and Technology of China
	$^2$Hong Kong Baptist University \\
	$^3$The University of Melbourne
	$^4$The University of Sydney \\
	$^5$RIKEN Center for Advanced Intelligence Project
	$^6$Max Planck Institute for Intelligent Systems  \\
	$^7$Carnegie Mellon University
	$^8$Mohamed bin Zayed University of Artificial Intelligence
}
\definecolor{darkblue}{rgb}{0, 0, 0.5}
\definecolor{beaublue}{rgb}{0.74, 0.83, 0.9}
\definecolor{gainsboro}{rgb}{0.86, 0.86, 0.86}
\newcommand{\yonggang}[1]{\textcolor{black}{#1}} 
\newtheorem{lemma}{Lemma}
\let\oldbibliography\thebibliography
\renewcommand{\thebibliography}[1]{%
	\oldbibliography{#1}%
	\setlength{\itemsep}{1pt}%
}
\newdimen\arrowsize
\begin{document}

\maketitle
\begingroup\renewcommand\thefootnote{$^{\dagger}$}
\footnotetext{Corresponding author (bhanml@comp.hkbu.edu.hk).}
\endgroup

\begin{abstract}
The adversarial vulnerability of deep neural networks has attracted signiﬁcant attention in machine learning. As causal reasoning has an instinct for modeling distribution change, it is essential to incorporate causality into analyzing this specific type of distribution change induced by adversarial attacks. However, causal formulations of the intuition of adversarial attacks and the development of robust DNNs are still lacking in the literature. To bridge this gap, we construct a causal graph to model the generation process of adversarial examples and define the adversarial distribution to formalize the intuition of adversarial attacks. 
From the causal perspective, we study the distinction between the natural and adversarial distribution and conclude that the origin of adversarial vulnerability is the focus of models on spurious correlations. Inspired by the causal understanding, we propose the \emph{Causal}-inspired \emph{Adv}ersarial distribution alignment method, CausalAdv, to eliminate the difference between natural and adversarial distributions by considering spurious correlations. Extensive experiments demonstrate the efficacy of the proposed method. Our work is the first attempt towards using causality to understand and mitigate the adversarial vulnerability.
\end{abstract}

\section{Introduction}
The seminal work \citep{2014,biggio} shows that DNNs are vulnerable to adversarial examples, which consist of malicious perturbations imperceptible to humans yet fooling state-of-the-art models \citep{alexnet,google,vgg,resnet}. The lack of robustness hinders the applications of DNNs to some safety-critical areas such as automatic driving \citep{auto} and healthcare \citep{med}. Therefore, mitigating the adversarial vulnerability is critical to the further development of DNNs.

Human cognitive systems are immune to the distribution change induced by adversarial attacks because humans are more sensitive to causal relations than statistical associations \citep{pr}. Using causal language, causal reasoning can identify causal relation and ignore nuisance factors, i.e., not the cause of labels, by intervention \citep{causality1,causality3}. As adversarial perturbations are usually imperceptible and make no impact on human decisions \citep{2014,2015}, it is reasonable to assume that the difference between natural and adversarial examples comes from nuisance factors. This is because if task-relevant factors of some samples are changed, these samples will make both humans and DNNs change their decisions. From a causal viewpoint \citep{cama}, adversarial attacks can be regarded as a specific type of distribution change resulting from the intervention on the natural data distribution. In summary, causal reasoning has the instinct for analyzing the effect of the intervention caused by adversarial attacks, so it is essential to leverage causality to understand and mitigate the adversarial vulnerability.

However, there are two significant problems to overcome before using causality to understand and mitigate the adversarial vulnerability. Firstly, constructing a causal graph is arguably the fundamental premise for causal reasoning \citep{causality1, causality3}, but how to construct causal graphs in the context of adversarial attacks is still lacking in the literature. Secondly, using causal language to formalize the intuition of adversarial attacks is the key to connect causality and adversarial vulnerability, but it also remains to be solved. These two problems are fundamental obstacles, which prevent us from employing causality to contribute to adversarial learning.

To address these challenges, we first construct a causal graph to model the perceived data generation process where nuisance factors are considered. The constructed causal graph can entail a specific intervention distribution, i.e., the adversarial distribution. Moreover, the causal graph immediately shows that, given inputs, labels are statistically correlated with nuisance factors, which have no cause-effect to labels. The spurious correlation implies that if DNNs fit the conditional association between labels and nuisance factors, their performance on different conditional associations between labels and nuisance factors will change accordingly. Through investigating the distinction between these two distributions induced by nuisance factors, we conclude that \emph{adversarial distributions can be obtained by exploiting conditional associations between labels and nuisance factors, where the conditional association of adversarial distributions is drastically different from that of natural distributions.} Namely, the origin of adversarial vulnerability is the focus of DNNs on the spurious correlation between labels and nuisance factors.

According to the causal perspective, an adversarial distribution is crafted by exploiting a specific conditional association between labels and nuisance factors, this association is drastically different from that of natural distributions. Intuitively, eliminating the difference in such conditional associations between natural and adversarial distributions can promote the performance of models on adversarial distributions. 
Thus, we propose the \emph{Causal}-inspired \emph{Adv}ersarial distribution alignment method, CausalAdv, to eliminate the difference between these two distributions.
Surprisingly, we find that the proposed method shares the same spirits to existing adversarial training \citep{2015} variants, i.e., Madry \citep{madry} and TRADES \citep{trades}. We validate the efficacy of the proposed method on MNIST, CIFAR10, and CIFAR100 \citep{cifar10} datasets under various adversarial attacks such as FGSM \citep{2015}, PGD \citep{madry}, CW attack \citep{cw}, and AutoAttack \citep{aa}. Extensive experiments demonstrate that CausalAdv can improve the adversarial robustness significantly. 

Our main contributions are:
\begin{itemize}
\item We provide a causal perspective to understand and mitigate the adversarial vulnerability, which is the first attempt towards using causality to contribute adversarial learning.
\item To leverage causality to contribute adversarial learning, we solve two fundamental problems. Specifically, we construct a causal graph to model the adversarial data generation process and define the adversarial distribution to formalize adversarial attacks.
\item A defense method called causal-inspired adversarial distribution alignment, CausalAdv, is proposed to reduce adversarial vulnerability by eliminating the difference between adversarial distribution and natural distribution. Extensive experiments demonstrate that the proposed method can significantly improve the adversarial robustness.
\end{itemize}

\section{A causal view on adversarial data generation} \label{sec2}
The ability of humans to perform causal reasoning is arguably an essential factor that makes human learning different from deep learning \citep{causality2,cama,pr}. The superiority of causal reasoning endows humans with the ability to identify causal relations and ignore nuisance factors that are not relevant to the task. In contrast, DNNs are usually trained to fit the perceived information overlooking the ability to distinguish causal relations and statistical associations. This \emph{shortcut} solution could lead to overfitting to these nuisance factors, which would further result in the sensitivity of DNNs to such factors. Therefore, we propose incorporating causal reasoning to mitigate the sensitivity of DNNs to these nuisance factors.

Before using causal reasoning to analyze adversarial vulnerability, we need to construct a causal graph, as causal graphs are the key for formalizing causal reasoning \citep{causality3}. In the context of adversarial learning, we desire the causal graph by which both the natural and the adversarial distributions can be generated. In addition, the graph is required to reflect the impact of nuisance factors on these two distributions, so that we can investigate the difference in nuisance factors between these two distributions. Consequently, we can formally establish the connection between nuisance factors and adversarial vulnerability. Therefore, we propose constructing a causal graph to model the adversarial generation process where the nuisance factors are considered. One approach is to use causal structure learning to infer causal graphs \citep{causality1,causality3}, but it is challenging to apply this kind of approach to high-dimensional data. Using external knowledge to construct causal graphs is another approach \citep{domain,hanwang,causality2}. As automatically learning a precise causal graph is out of scope for this work, external human knowledge about the data generation process is employed to construct the causal graph. 

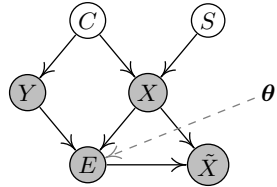
\begin{wrapfigure}{r}{0.45\textwidth}
\centering
	{\hspace{0cm}\begin{tikzpicture}[scale=.8, line width=0.5pt, inner sep=0.2mm, shorten >=.1pt, shorten <=.1pt]
		\draw (0, 0) node(s) [circle, draw]  {{\footnotesize\,$S$\,}};
           \draw (-2, 0) node(c)[circle, draw]  {{\footnotesize\,$C$\,}};
		\draw (-3, -1.2) node(y) [circle, draw, fill=black!25]  {{\footnotesize\,$Y$\,}};
		\draw (-1, -1.2) node(tx)[circle, draw, fill=black!25]  {{\footnotesize\,$X$\,}};
           \draw (1, -1.2) node(theta)  {{\footnotesize\,$\bm{\theta}$\,}};
		\draw (-2, -2.4) node(e)[circle, draw, fill=black!25]  {{\footnotesize\,$E$\,}};
		\draw (0, -2.4) node(x)[circle, draw, fill=black!25] {{\footnotesize\,$\tilde{X}$\,}};

		\draw[-arcsq] (s) -- (tx); 
		\draw[-arcsq] (c) -- (tx);
		\draw[-arcsq] (y) -- (e);
		\draw[-arcsq] (tx) -- (e);
           \draw[-arcsq] (tx) -- (x);
		\draw[-arcsq] (e) -- (x);
		\draw[-arcsq] (c) -- (y);
		\draw[-arcsq,gray, dashed] (theta) -- (e);
		\end{tikzpicture}}
	\caption{Causal graph of the perturbed data generation process. Each node represents a random variable, and gray ones indicate observable variables, where $C,S,X,Y,E,\tilde{X},\bm{\theta}$ are content variable, style variable, natural data, label, perturbation, perturbed data and parameters of a neural network, respectively.
	} \label{fig1}
\vspace{-1mm}
\end{wrapfigure}

\yonggang{Specifically, we construct a causal graph $\mathcal{G}$ to formalize the image data generation process using the following knowledge for analyzing adversarial vulnerability.} As there might be a number of different causes of natural data $X$, we propose to divide all the causes into two categories for simplicity. We group content-related causes into one category, called content variable $C$. The rest causes, i.e., nuisance factors, are grouped into another category, called style variable $S$, which is content-independent, i.e., $S \Vbar C$. This implies that $C \rightarrow X \leftarrow S$. It is noteworthy that, in this paper, we assume that only the content variable is relevant for the task we care about, i.e., $C \rightarrow Y$. Perceived data $\tilde{X}$ are usually composed of perturbations $E$ and natural data $X$. When the perturbation $E$ is designed carefully to fool DNNs, $E$ should be a compound result of $X$ (the object to be perturbed), $Y$ (the reference for the perturbation), and $\bm{\theta}$ (the targets affected by the perturbation), e.g., white-box attacks \citep{2015,cw,deepfool}, which means that $(X, Y,\bm{\theta})\rightarrow E$ \footnote{These three causes are not indispensable. Deleting $\bm{\theta}$ gives black-box attacks \citep{black1,momentum} that only use $(X, Y)$ to perform attacks. Deleting $X$ leads to universal adversarial attacks \citep{uni1,uni2}, which assume that one adversarial perturbation is sufficient to fool DNNs. Recent works show that a specific transform to $X$ effectively evaluates the sensitivity of DNNs within a small neighborhood \citep{fori,pcae}, which corresponds to deleting $(Y, \bm{\theta})$ from the cause set.}. Leverage all this background knowledge, we obtain the causal graph $\mathcal{G}$ formalizing the perturbed data generation process, depicted in Fig. \ref{fig1}.

Based on the causal graph, we can define valid interventions \citep{causality1, causality2} and the corresponding intervention distributions. Defining valid interventions is equivalent to determining which variables or mechanisms in the causal graph can be intervened. In this work, we consider both hard and soft interventions \citep{eberhardt2007interventions,correa2020calculus} on the perturbation variable $E$. Specifically, we can use a structural causal model to represent the generating mechanism of $E$:
\begin{equation} \label{eq:e}
	E \coloneqq \mathcal{M} (X,Y,\bm{\theta},U_{E}),
\end{equation}
where the exogenous variable $U_{E}$ stands for other indeterminacies, e.g., random start noise used in PGD attack \citep{madry}.

By intervening $E$ in different ways, we can obtain different intervention distributions, which could be either natural or adversarial, over the observed variables. For example, if we do a hard intervention on $E$, i.e., $do(E=\bm{0})$ in the graph, the generated data distribution corresponds to the natural distribution $P\left(X, Y\right)$. Different perturbations can be obtained by performing soft interventions on $E$, i.e., modifying $\mathcal{M}$. In the context of adversarial attacks, adversaries aim to maximize a certain objective function $\ell(\cdot)$ to mislead a target model $h \left( X; \bm{\theta} \right)$ by searching for the worst perturbation for each instance \citep{2015,cw,momentum,madry}, where $\bm{\theta}$ stands for parameters of the target model. To formalize the intuition of adversarial attacks, we can first search for the adversarial perturbation $E_{adv}$, and then use a function to approximate the mechanism for generating $E_{adv}$. The adversarial perturbation $E_{adv}$ can be obtained by maximizing $\ell(\cdot)$: 
\begin{equation} \label{eq:at}
		E_{adv} = 
		\mathop{\arg\max} \limits_{E' \in \mathbb{B}} \ell(h(X + E';\bm{\theta}), Y).
\end{equation}
where $\mathbb{B}$ is a set of valid perturbations, and the adversarial perturbation $E_{adv}$ is the result of the mechanism $\mathcal{M}_{adv}$, i.e., $E_{adv} \coloneqq \mathcal{M}_{adv} (X, Y, \bm{\theta},U_{E})$. The intervention distribution that corresponds to the adversarial mechanism $M_{adv}$ is defined as the adversarial distribution $P_{\bm{\theta}}(\tilde{X},Y)$, where the subscript $\bm{\theta}$ indicates that $P_{\bm{\theta}}(\tilde{X},Y)$ is crafted to attack the target model $h \left( X; \bm{\theta} \right)$ with parameter $\bm{\theta}$.
\vspace{-3mm}

\section{Method}
To understand the adversarial vulnerability, we study the difference between the natural and adversarial distributions and conclude that conditional associations between labels and style factors play a crucial role in adversarial vulnerability. Inspired by the conclusion, we propose a method to eliminate the difference in such conditional associations to mitigate adversarial vulnerability.

\subsection{Origin of adversarial vulnerability}
Inspired by the causal graph, we derive a causal understanding of adversarial vulnerability. According to the causal graph $\mathcal{G}$ depicted in Fig. 1, there is a path, $S \rightarrow \underline{X} \leftarrow C \rightarrow Y$, from the style variable $S$ to the label $Y$ when $X$ is given~\footnote{Here, we use the underlined variable $\underline{X}$ to represent that the variable $X$ is given.}, which leads to the correlation between labels and style variables. The spurious correlation implies that DNNs can perform well on the training set by fitting the statistical association between $Y$ and $S$, even though the genuine content information is dropped. Moreover, if the conditional association between $Y$ and $S$ on the (natural) test set is similar to that on the training set, fitting these spurious correlations will also perform well on the test set. This is consistent with the recent work \citep{bug}, which shows that training DNNs with incorrectly labeled data, i.e., the genuine content information is \emph{not} utilized for training, yields good accuracy on the (natural) test set. However, suppose DNNs learn such spurious correlation. In that case, their performance will change with the conditional association between $Y$ and $S$. Consequently, fitting such spurious correlation gives adversaries a chance to fool DNNs. Therefore, the conditional association between labels and style variables is a key to understand adversarial vulnerability.

To identify the origin of adversarial vulnerability from the spurious correlation perspective, we study the distinction between the adversarial distribution and the natural distribution. To look at what makes the adversarial distribution different from the natural distribution, we expand the natural $P(Y|X)$ and the adversarial distribution $P_{\bm{\theta}}(Y|\tilde{X})$
\begin{equation} \label{eq2}
		P(Y|X) =\sum_{s\in \mathbb{S}} 
		P(s|X) P(Y|X, s), 
		\textit{        }
		P_{\bm{\theta}}(Y|\tilde{X}) = 
		\sum_{s \in \mathbb{S}} 
		P_{\bm{\theta}}(s|\tilde{X}) P_{\bm{\theta}}(Y|\tilde{X}, s),
\end{equation}
\yonggang{where we assume the set of valid styles $\mathbb{S}$ is discrete, and $s$ stands for a certain style.} We can see that the difference between $P(Y|X)$ and $P_{\bm{\theta}}(Y|\tilde{X})$ results from two terms, i.e., $P_{\bm{\theta}}(s|\tilde{X})$ and $P_{\bm{\theta}}(Y|\tilde{X}, s)$. $P_{\bm{\theta}}(s|\tilde{X})$ represents the change of style information, e.g., textures \citep{texture}, transformations \citep{sim,relic,moco}, and domain shifts \citep{domainshift}. It is shown that changing image textures has a significant impact on predictions of DNNs \citep{texture}, but such drastic style changes will hardly appear in the context of adversarial attacks as perturbations are required to be imperceptible~\footnote{\yonggang{In this work, we focus on imperceptible adversarial examples, i.e., the widely studied $\ell_{p}$-norm bounded adversarial examples, where drastic style changes hardly appear.}}. Although modifying style variables $S$ is not allowed, adversaries can exploit the conditional association between $Y$ and $S$, i.e., $P_{\bm{\theta}}(Y|\tilde{X}, s)$, via injecting a specific perturbation to generate adversarial distributions. Specifically, under an appropriate soft intervention, i.e., $\mathcal{M}_{adv}$, the conditional distribution $P_{\bm{\theta}}(Y|\tilde{X}, s)$ can be drastically different from $P(Y|X, s)$. Namely, the property of adversarial distribution essentially results from the drastic difference in the statistical association between labels and style variables. Intuitively, if DNNs fit the conditional associations between $Y$ and $S$, their performance will change with the spurious correlation. Consequently, a drastic difference between the conditional association between $Y$ and $S$ will cause significant performance degradation of DNNs. 

\yonggang{Hence, the origin of adversarial vulnerability is the excessive focus of DNNs on spurious correlations between labels and style variables. This conclusion provides a causal perspective for the empirical observation, i.e. some features are useful but not robust~\citep{bug}, so adversarial examples can be viewed as a model phenomenon rather than merely a human phenomenon.}

\subsection{The adversarial distribution alignment method}
In light of the causal understanding of the adversarial vulnerability, improving adversarial robustness requires forcing DNNs to fit the causal relations rather than merely the statistical associations. However, only the perceived data can be observed in practice, and the supervised information of content variables is usually unavailable. Therefore, an approach that can avoid reliance on such supervised information is required.

To develop such an approach, we revisit the difference between the adversarial and natural distributions. Intuitively, if the difference between the natural and the adversarial distribution is negligible, the adversarial vulnerability of DNNs can be mitigated, as DNNs perform well on the natural distribution \citep{alexnet,resnet,fitwell}. Consequently, a straightforward solution for improving adversarial robustness is to align these two distributions. The aforementioned analysis shows that the property of adversarial distributions to fool DNNs comes from specific conditional associations between labels $Y$ and style variables $S$, i.e., $P_{\bm{\theta}}(Y|\tilde{X}, s)$ and $P(Y|X, s)$. Inspired by the conclusion, we propose an adversarial distribution alignment method to eliminate the difference between the natural and adversarial distributions. Specifically, we regard the natural distribution $P(Y|X, s)$ as an \emph{anchor}, then align the adversarial distribution $P_{\bm{\theta}}(Y|\tilde{X}, s)$ with the anchor such that the difference between these two distributions is negligible. In addition, the relationship between $Y$ and $X$ should also be the same on both the adversarial and natural distributions. Concretely, we operationalize the intuition of the adversarial distribution alignment by:
\begin{equation}  \label{eq:ada0}
\begin{aligned}
    \mathop{\min}_{\bm{\theta}} \  
    d\left(P\left(Y|X\right), P_{\bm{\theta}}\left(Y|\tilde{X}\right)\right) + 
    \lambda \mathbb{E}_{s} d \left(P\left(Y|X, s\right), P_{\bm{\theta}}\left(Y|\tilde{X}, s \right)\right).
\end{aligned}
\end{equation}
Here $d(\cdot)$ is a metric reflecting the divergence between two distributions, and $\lambda > 0$ presents the weighting of the misalignment penalty. In addition, we assume $X$ and $\tilde{X}$ have the same support set because adversarial perturbations are usually bounded. Solving the adversarial distribution alignment objective is equivalent to search a classifier $h(\tilde{X} ;\bm{\theta})$ such that the adversarial distribution of the classifier is similar to the natural distribution. Benefiting from the consistency of these two distributions, classifier $h(\tilde{X} ;\bm{\theta})$ can perform well on both the natural and adversarial distributions.

In practice, the adversarial distribution often cannot be obtained analytically, so we relax the distribution divergence in Eq. \ref{eq:ada0} to the sum of two divergences\yonggang{, see Appendix~\ref{sec:relaxation} for details}. For the divergence between $P(Y|X)$ and $P_{\bm{\theta}}(Y|\tilde{X})$, we have:
\begin{flalign} \label{eq:ada01}
	    &KL\left(P_{\bm{\theta}}\left(Y|\tilde{X}\right), Q_{\bm{\theta}}\left(Y|\tilde{X}\right)\right) \nonumber + 
	    \gamma KL\left(P\left(Y|X\right), Q_{\bm{\theta}}\left(Y|X\right)\right)
	    \\=&
	    \mathbb{E}_{\left(\tilde{X},Y\right)\sim P_{\bm{\theta}}\left(\tilde{X},Y\right)}
	    CE\left(h\left(\tilde{X} ;\bm{\theta}\right), Y\right) + 
	    \gamma \mathbb{E}_{\left(X,Y\right)\sim P\left(X,Y\right)} 
	     CE\left(h\left(X ;\bm{\theta}\right), Y\right) \nonumber
	    \\ \approx &
	    \mathbb{E}_{(X,Y)\sim P(X,Y)}
	    CE\left(h\left(X+E_{adv};\bm{\theta}\right), Y\right) + 
	    \gamma CE\left(h\left(X;\bm{\theta}\right), Y \right),
\end{flalign}
where $KL$ is the Kullback-Leibler divergence, $CE$ is the cross-entropy loss, $Q_{\bm{\theta}}(Y|X)$ is the conditional distribution specified by the classifier $h\left(X;\bm{\theta}\right)$, and $\gamma$ is a tunable hyperparameter. Because adversarial examples are usually generated by adding perturbation to their corresponding natural samples rather than sampled from the adversarial distribution independently, we use $X+E_{adv}$ to approximate examples independently sampled from the adversarial distribution, where $X$ is sampled from the natural distribution independently.

Similar to Eq. \ref{eq:ada01}, the divergence between $P(Y|\tilde{X},s)$ and $P_{\bm{\theta}}(Y|\tilde{X},s)$ can be approximated by: 
	\begin{flalign} \label{eq:ada02}
		\mathbb{E}_{(X,Y)\sim P(X,Y)} 
		CE\left(g\left(s\left(X + E_{adv}\right); W_{g}\right), Y\right) + 
		\beta \left(g\left(s\left(X\right); W_{g}\right), Y\right),
	\end{flalign}
where $g\left(s\left(X\right); W_{g}\right)$ is a function used for modeling the statistically conditional association between labels and style variables, $s\left(X\right)$ stands for the integrated representation of $X$ and $s$, and $\beta$ is a tunable hyperparameter. Thus, the overall objective of the proposed adversarial distribution alignment method can be expressed as
\begin{equation} \label{eq:ada2}
\begin{aligned}
	   \min_{\bm{\theta}, W_{g}} 
	   &\mathbb{E}_{(X,Y)\sim P(X,Y)}
	   CE\left(h\left(X+E_{adv};\bm{\theta}\right), Y\right) + 
	   \gamma CE\left(h\left(X;\bm{\theta}\right), Y \right) \\
	   &  + \lambda \left( 
	   \mathbb{E}_{s}
	   CE\left(g\left(s\left(X + E_{adv}\right); W_{g}\right), Y\right) + 
	   \beta CE \left(g\left(s\left(X\right); W_{g}\right), Y\right)
	   \right).
\end{aligned}
\end{equation}
\yonggang{To make sure that introducing $g$ can benefit learning $h$, it is necessary to design an approach to connect model $g$ and model $h$. In this paper, we connect $g$ and $h$ by representation sharing.} Interestingly, according to Eq. \ref{eq:ada0} and Eq. \ref{eq:ada2}, if we omit the spurious correlation between labels and style variables, i.e., $\lambda=0$, and set $\gamma=0$, Eq. \ref{eq:ada2} then becomes the objective function introduced by Madry \citep{madry}. This suggests that the proposed adversarial distribution alignment method is consistent with the seminal variant Madry \citep{madry} of adversarial training.

\subsection{Realization of adversarial distribution alignment method}
According to Eq. \ref{eq:ada2}, realizing the proposed adversarial distribution alignment method requires predicting labels with all style variables, i.e., modeling the statistically conditional association between $Y$ and $S$. However, there are two major obstacles preventing us from calculating the last term of Eq. \ref{eq:ada2}. Specifically, a) the number of all possible styles $s$ is infinite, so the cost of calculating the expectation in Eq. \ref{eq:ada2} can grow to infinity; b) the representation of the integrated representation of $X$ and $s$, i.e., $s\left(X\right)$, used for predicting labels is unknown.

To calculate the expectation in Eq. \ref{eq:ada2}, we have to make some assumptions to approximate the distribution of the style variable, as the true distribution is unknown. Following previous work \citep{gaussian1,gaussian2}, we assume a Gaussian distribution to approximate the unknown distribution. Specifically, the representation $s\left(X\right)$ is estimated by $\hat{s}\left(X\right)$ sampled from a Gaussian distribution, i.e., $\hat{s}\left(X\right) \sim \mathcal{N}\left(\mu\left(X\right), \Sigma\right)$, where $\mu\left(X\right)$ is employed to model the Gaussian distribution's mean, and $\Sigma$ is the covariance matrix. Because little prior knowledge about the covariance matrix is observed, we simply regard the covariance matrix $\Sigma$ as an identity matrix, i.e., $\hat{s}\left(X\right) \sim \mathcal{N}\left(\mu\left(X\right), \sigma^{2} \bm{I} \right)$. Then, the expectation can be approximated by
\begin{equation} \label{eq:approx01}
	\mathbb{E}_{s} 
	CE\left(g\left(s\left(X\right); W_{g}\right), Y\right)
	\approx
	\mathbb{E}_{\hat{s}\left(X\right) \sim \mathcal{N}\left(\mu\left(X\right), \sigma^{2} \bm{I} \right)}
	CE\left(g\left(\hat{s}\left(X\right);W_{g}\right), Y \right).
\end{equation}
Thanks to the Gaussian distribution assumption, we can derive an upper bound of the expectation:
\begin{equation} \label{eq:approx02}
	\mathbb{E}_{\hat{s}\left(X\right) \sim \mathcal{N}\left(\mu\left(X\right), \sigma^{2} \bm{I} \right)}
	CE\left(g\left(\hat{s}\left(X\right);W_{g}\right), Y \right)
	\leq
	CE\left(\overline{g}\left(\mu \left(X\right);W_{g}\right), Y \right),
\end{equation}
where the probability of the $i^{th}$ category of $\overline{g} \left( \mu \left(X\right) ; W_{g}\right)$ is (see Appendix \ref{sec:calculation} for details)
\begin{equation} \label{eq:softmax}
P\left(Y = i | \overline{g} \left( \mu \left(X\right) ; W_{g}\right)\right)
= 
\frac{e^{W_{g,i}^{\top} \mu (X)}}
{\sum_{j} e^{W_{g,j}^{\top} \mu \left(X\right)
		+ 
		\frac{\sigma^{2}}{2} \left(W_{g,j} - W_{g,i}\right)^{\top} \left( W_{g,j} - W_{g,i}\right)}
}.
\end{equation}
Here, we simply set $g$ to a linear function, i.e., $W_{g}$ is a linear mapping. Instead of calculating all styles $\hat{s}\left(X\right)$, the conclusion of Eq. \ref{eq:approx02} and Eq. \ref{eq:softmax} shows that only the mean and variance of $\hat{s}\left(X\right)$ are required to calculate the expectation in Eq. \ref{eq:ada2}.

In light of Eq. \ref{eq:approx02} and Eq. \ref{eq:softmax}, the challenge of learning the representation of $s\left(X\right)$ boils down to estimating the mean of $\hat{s}\left(X\right)$, i.e., no need to estimate every style representation explicitly. In addition, the variance can be treated as a hyperparameter. To estimate the mean, we draw inspiration from the proposed causal graph. According to the causal graph $\mathcal{G}$, the content and style variables are statistically independent, i.e., $C \Vbar S$. Thus, the estimated content $\hat{c}\left(X\right)$ and style $\hat{s}\left(X\right)$ are desired to be independent, i.e., $\hat{c}\left(X\right) \Vbar \hat{s}\left(X\right)$. \yonggang{The underlying intuition is that we aim to model the causal relation between the content variable and the style variable and omit the spurious correlation between $\hat{c}\left(X\right)$ and $\hat{s}\left(X\right)$, because the spurious correlation is not a causal relation, although $\hat{c}\left(X\right)$ and $\hat{s}\left(X\right)$ are not statistically independent.}
Following previous work \citep{lenet,rl}, we regard DNNs as a combination of representation learning modules and linear classifier modules. Specifically, we apply a linear function to the learned representation for approximating the content which is further used to predicting labels. Specifically, we have $\hat{c}\left(X\right) = \hat{c}\left(r\left(X;W_{r}\right);W_{c}\right) = W_{c}r\left(X;W_{r}\right)$. That is, $W_{c}$ are parameters used for predicting labels. According to the Gaussian distribution assumption, we can obtain the estimated style by the reparameterization trick, i.e., $\hat{s}\left(X\right) = \mu\left(X;W_{s}\right) + \sigma \bm{n}$, where $W_{s}$ presents parameters for modeling the mean, and $\bm{n}$ is sampled from a normal distribution. For simplicity, we assume that $\mu\left(X;W_{s}\right)$ is an affine mapping applied to the learned representation \footnote{Studying non-linear functions is an interesting open question, and we leave it as future work.}, i.e., $\mu\left(X;W_{s}\right)=\mu\left(r\left(X;W_{r}\right);W_{s}\right) = W_{s}r\left(X;W_{r}\right)$. Assume that $r\left(X;W_{r}\right)$ is a Gaussian distribution with a covariance matrix $M$. 
Then, the independence $\hat{c}\left(X\right) \Vbar \hat{s}\left(X\right)$ holds if we set $W_{s}$ as an instantiation of the orthogonal complement of $W_{c}$, i.e., $\text{ker}\left(W_{c}\right)^{\perp} \perp_{M} \text{ker}\left(W_{s}\right)^{\perp}$. 
Here, we define $\langle a, b \rangle_{M}=\langle a, M b \rangle$, and the orthogonality of two subspaces is defined likewise. Therefore, we can simply employ the learned representation $r\left(X;W_{r}\right)$ and the parameters used for predicting labels, i.e., $W_{c}$, to estimate $\mu\left(X\right)$, see Appendix \ref{sec:therom} for details.

Combining Eq. \ref{eq:ada2} and Eq. \ref{eq:approx02}, we derive a realization of adversarial distribution alignment method:
\begin{equation} \label{eq:final}
		\begin{aligned}
			\min_{\bm{\theta}, W_{g}} 
			&\mathbb{E}_{(X,Y)\sim P(X,Y)}
			CE\left(h\left(X+E_{adv};\bm{\theta}\right), Y\right) + 
			\gamma CE\left(h\left(X;\bm{\theta}\right), Y \right) \\
			&  + \lambda \left( 
			CE\left(\overline{g}\left(\mu \left(X + E_{adv} \right);W_{g}\right), Y \right) + 
			\beta CE\left(\overline{g}\left(\mu \left(X\right);W_{g}\right), Y \right)
			\right).
		\end{aligned}
\end{equation}
Given $X$, Eq. \ref{eq:final} encourages the statistically conditional association between labels and style variables of the adversarial distribution to be close to that of the natural distribution. The explicit distribution alignment can reduce the difference between the adversarial distribution and the natural distribution. If the conditional association of the adversarial distribution is similar to that of the natural distribution, it should be hard for the adversary to find adversarial examples, which is consistent with recent work \citep{hard,causality2}.
\vspace{-2mm}

\section{Experiment}
\vspace{-2mm}

\subsection{Setups} \label{sec4.1}
\textbf{Baseline methods.}
Our experiments are designed to demonstrate the necessity of considering the spurious correlation between labels $Y$ and style variables $S$ when developing robust models. Eq. \ref{eq:ada2} shows that if we set the hyperparameter $\gamma=0$, and omit the spurious correlation between $Y$ and $S$, the proposed method is equivalent to the adversarial training variant Madry \citep{madry}. Thus, to demonstrate the necessity of considering the spurious correlation, we set $\gamma=0$ and $\lambda>0$ in Eq. \ref{eq:ada2}, named CausalAdv-M, and compare the adversarial robustness of CausalAdv-M with that of Madry. In addition, the model capacity is often insufficient in adversarial training~\citep{madry}, so replacing one-hot labels of the first term in Eq. \ref{eq:ada2} with soft targets, i.e., the model prediction $h(X;\bm{\theta})$, can relieve the problem of insufficient model capacity \footnote{Knowledge distillation \citep{disti} 
shows that models with insufficient capacity prefer soft targets.}. Considering the insufficient model capacity, we replace $Y$ in the first term of Eq. \ref{eq:ada2} with the model prediction, and the derived method is called CausalAdv-T. We find that CausalAdv-T becomes the objective introduced in TRADES \citep{trades} when we omit the spurious correlation, i.e., $\lambda=0$, which shows that the proposed method also shares the same spirits to TRADES. Therefore, to demonstrate the importance of the spurious correlation between $Y$ and $S$, we compare CausalAdv-M and CausalAdv-T with Madry and TRADES, respectively.

\textbf{Evaluation metrics and training details.}
To evaluate the robustness for different methods, we compute the test accuracy on natural and adversarial examples with $\ell_{\infty}$-norm bounded perturbation generated by: FGSM \citep{2015}, PGD \citep{madry}, and C\&W \citep{cw} attacks. The robustness is evaluated on both the best checkpoint model suggested by \citep{overfitting} and the last checkpoint model used in \citep{madry}, respectively. For MNIST, we use the same CNN architecture as \citep{cw,trades}. For CIFAR10 and CIFAR100, two architectures are employed: ResNet-18 \citep{resnet} and WRN-34-10 \citep{wrn}. The settings of attacks and hyper-parameters for training are the same as previous works, more details can be found in Appendix \ref{sec:settings}. 

\begin{table}[]
\caption{Classification accuracy (\%) on MNIST under the white-box threat model with $\epsilon=0.3$. The best-performance model and the corresponding accuracy are highlighted.} \label{table1}
\small
\vspace{-1mm}
\centering
\setlength{\tabcolsep}{1.2mm}{
\begin{tabular}{c|llll|llll}
\hline
\multicolumn{1}{l|}{\multirow{2}{*}{Method}} & \multicolumn{4}{c|}{Best checkpoint}   & \multicolumn{4}{c}{Last checkpoint}    \\
\multicolumn{1}{l|}{} & \multicolumn{1}{c}{Natural} & \multicolumn{1}{c}{FGSM} & \multicolumn{1}{c}{PGD-40} & \multicolumn{1}{c|}{CW-40} & \multicolumn{1}{c}{Natural} & \multicolumn{1}{c}{FGSM} & \multicolumn{1}{c}{PGD-40} & \multicolumn{1}{c}{CW-40} \\ \hline
Madry  &   99.48   &   97.82    &   95.75  &   95.92 &   99.47   &    96.52  & 94.33  &   94.45    \\
CausalAdv-M   &  \textbf{99.53}\tiny{$\pm$0.04} &    \textbf{98.02}\tiny{$\pm$0.07}   &    \textbf{96.37}\tiny{$\pm$0.12}   &     \textbf{96.47}\tiny{$\pm$0.17}   &    \textbf{99.49}\tiny{$\pm$0.08}    &     \textbf{96.83}\tiny{$\pm$0.10}   &  \textbf{94.67}\tiny{$\pm$0.14} & \textbf{94.84}\tiny{$\pm$0.19}   \\
\hline
TRADES    & 99.39     &    97.22     &    96.55   &  96.66  &  99.36    &    96.76    &   94.89    &   94.91  \\
CausalAdv-T    & \textbf{99.49}\tiny{$\pm$0.06}  &    \textbf{97.82}\tiny{$\pm$0.07}   &    \textbf{96.72}\tiny{$\pm$0.10}&    \textbf{96.78}\tiny{$\pm$0.15}   &    \textbf{99.49}\tiny{$\pm$0.04}    &     \textbf{97.32}\tiny{$\pm$0.08}     &     \textbf{96.63}\tiny{$\pm$0.13} &   \textbf{96.69}\tiny{$\pm$0.21}  \\ \hline
\end{tabular}
}
\vspace{-2mm}
\end{table}

\begin{table}[]
\caption{Classification accuracy (\%) of ResNet-18 on CIFAR-10 under the white-box threat model with $\epsilon=8/255$. The best-performance model and the corresponding accuracy are highlighted.} \label{table2}
\vspace{-1mm}
\small
\centering
\setlength{\tabcolsep}{1.2mm}{
\begin{tabular}{c|llll|llll}
\hline
\multicolumn{1}{l|}{\multirow{2}{*}{Method}} & \multicolumn{4}{c|}{Best checkpoint} & \multicolumn{4}{c}{Last checkpoint} \\
\multicolumn{1}{l|}{}                        & \multicolumn{1}{c}{Natural} & \multicolumn{1}{c}{FGSM} & \multicolumn{1}{c}{PGD-20} & \multicolumn{1}{c|}{CW-20} & \multicolumn{1}{c}{Natural} & \multicolumn{1}{c}{FGSM} & \multicolumn{1}{c}{PGD-20} & \multicolumn{1}{c}{CW-20} \\ \hline
Madry&     \textbf{83.56}    &     56.69      &       51.92    &     51.00     &        \textbf{84.65}      &      54.37   &        46.38     &    46.73\\
CausalAdv-M&     80.42\tiny{$\pm$0.39}    &     \textbf{57.98}\tiny{$\pm$0.21}   &    \textbf{54.44}\tiny{$\pm$0.18}   &   \textbf{52.51}\tiny{$\pm$0.25}     &     83.72\tiny{$\pm$0.41}   &   \textbf{59.17}\tiny{$\pm$0.24}   &  \textbf{51.82}\tiny{$\pm$0.19}     &      \textbf{50.93}\tiny{$\pm$0.27}\\
\hline
TRADES&     \textbf{81.39}    &     57.25     &      53.64    &     51.39     &        \textbf{82.91}      &      57.95   &        52.80     &      51.27\\
CausalAdv-T&     81.22\tiny{$\pm$0.27}    &     \textbf{58.97}\tiny{$\pm$0.17}     &      \textbf{54.55}\tiny{$\pm$0.16}    &    \textbf{52.95}\tiny{$\pm$0.26}   &     81.62\tiny{$\pm$0.30}     &     \textbf{58.90}\tiny{$\pm$0.16}  &    \textbf{53.64}\tiny{$\pm$0.14}  &  \textbf{52.70}\tiny{$\pm$0.37}\\
\hline
\end{tabular}
}
\vspace{-2mm}
\end{table}

\subsection{Robustness evaluation}
We evaluate the robustness of Madry, TRADES, and the proposed method on MNIST, CIFAR10, and CIFAR100 against various attacks \citep{2015,madry,cw}, which are widely used in the literature. We report the classification accuracy on MNIST in Table \ref{table1}, where “Natural” denotes the accuracy on natural test images. We denote by PGD-40 the PGD attack with $40$ iterations for generating adversarial examples, which also applies to the C\&W attack. The results of ResNet-18 on CIFAR10 and CIFAR100 are illustrated in Table \ref{table2} and Table \ref{table3}, respectively. The results of WRN-34-10 are in Appendix \ref{sec:experiments}. We can see that the proposed method achieves the best robustness against all three types of attacks, demonstrating that taking into account the spurious correlation can significantly improve the adversarial robustness. To further understand the comparative effects of different terms of the proposed method, we reorganize the robust accuracy of the best checkpoint trained on CIFAR-10 and CIFAR-100 in Appendix \ref{sec:ablation}.
\vspace{-2mm}

\subsection{Discussion}

\begin{wrapfigure}{r}{0.45\textwidth}
\vspace{-25pt}
 \includegraphics[width=0.41\textwidth]{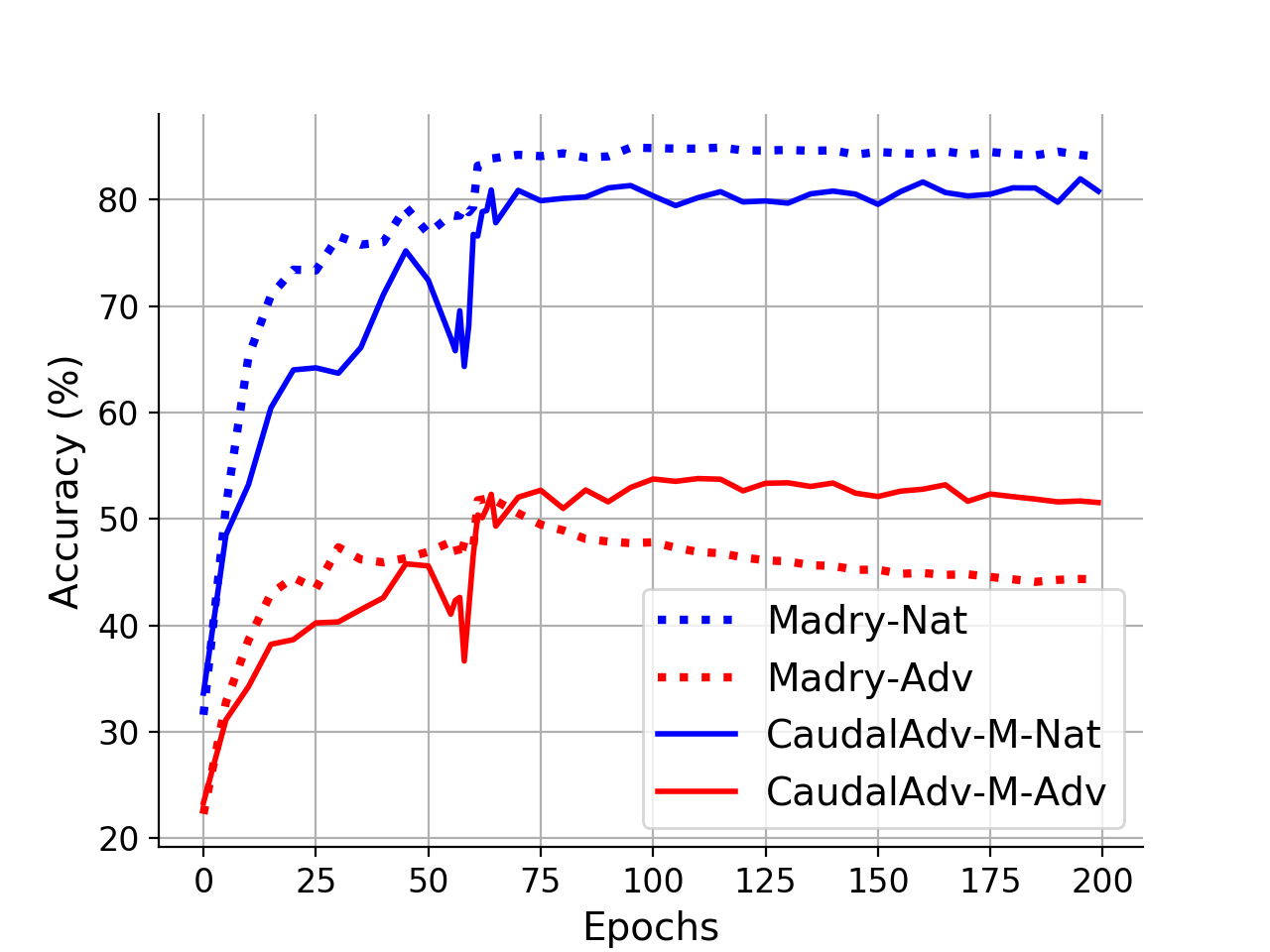}  
 \centering
 \caption{\small Comparisons of Madry (dotted lines) and CausalAdv-M (solid lines) using ResNet-18 on the CIFAR-10 dataset under PGD-20 attack. Madry-Nat and Madry-Adv represent the accuracy of Madry on the natural and adversarial data, respectively, which also applies to CausalAdv-M. To verify that CausalAdv-M effectively alleviates the robust overfitting rather than delaying the occurrence of robust overfitting, we train these models with 200 epochs which is larger than the basic setting.}
 \label{fig2}
 \vspace{-15pt}
\end{wrapfigure}

\textbf{Consideration of gradient obfuscation.} According to the criterion suggested by \citep{mask}, we exclude the potential effect of gradient obfuscation by showing the following phenomenons: a) the performance of our method on FSGM attack ($59.17\%$) is better than iterative attacks PGD-20 ($51.82\%$) and C\&W-20 ($50.93\%$); b) the performance of our method on black-box PGD-20 ($83.65\%$) and C\&W-20 ($83.57\%$) attacks is better than that on white-box attacks ($51.82\%$); c) strong attacks cause lower accuracy than weak attacks, i.e., the accuracy on PGD-20 and PGD-100 are $51.82\%$ and $48.05\%$, respectively. In addition, no gradient shattering operator is used in our method. All these results are evaluated on the CIFAR10 dataset using the last checkpoint of ResNet-18. Thus, according to the criterion suggested by \citep{mask}, the robustness improvement of the proposed method does \emph{not} result from gradient obfuscation.

\textbf{Consideration of adaptive attack.} According to the adaptive attack criterion \citep{adaptive}, we augment the original objective function used in the PGD attack with the proposed adversarial distribution alignment loss to implement adaptive attacks, i.e., Eq. \ref{eq:final}. Under the adaptive attack, the accuracy is $51.68\%$, while under the original PGD-20 attack is $51.82\%$, demonstrating that the proposed method is genuinely robust.

\textbf{Consideration of AutoAttack.} Following previous work \citep{he}, we verify the effectiveness of the proposed method on AutoAttack~\citep{aa}. According to the commonly used setting, see \citep{he}, we report robust accuracy of WRN-34-10 trained with CIFAR10 dataset on Auto-Attack, Madry: $49.58\%$, CausalAdv-M: $51.56\%$, TRADES: $52.46\%$, CausalAdv-T: $54.09\%$, and HE \citep{he}: $53.74\%$, where HE is an adversarial variant achieving state-of-the-art performance. These results show that the proposed method can endow models with robustness comparable to the state-of-the-art performance.

\begin{table}[]
	\caption{Classification accuracy (\%) of ResNet-18 on CIFAR-100 under the white-box threat model with $\epsilon=8/255$. The best-performance model and the corresponding accuracy are highlighted.} \label{table3}
	\small
	\vspace{-1mm}
	\centering
	\setlength{\tabcolsep}{1.2mm}{
		\begin{tabular}{c|llll|llll}
			\hline
			\multicolumn{1}{l|}{\multirow{2}{*}{Method}} & \multicolumn{4}{c|}{Best checkpoint}                                                                             & \multicolumn{4}{c}{Last checkpoint}                                                                            \\
			\multicolumn{1}{l|}{} & \multicolumn{1}{c}{Natural} & \multicolumn{1}{c}{FGSM} & \multicolumn{1}{c}{PGD-20} & \multicolumn{1}{c|}{CW-20} & \multicolumn{1}{c}{Natural} & \multicolumn{1}{c}{FGSM} & \multicolumn{1}{c}{PGD-20} & \multicolumn{1}{c}{CW-20} \\ \hline
			Madry & \textbf{55.98} & 28.39          & 25.15          & 24.04          & \textbf{55.08} & 25.35          & 21.63          & 21.42 \\
			CausalAdv-M  & 54.07\tiny{$\pm$0.17} & \textbf{29.76}\tiny{$\pm$0.16}  & \textbf{27.62}\tiny{$\pm$0.13} & \textbf{25.44}\tiny{$\pm$0.15} & 54.81\tiny{$\pm$0.23}& \textbf{26.83}\tiny{$\pm$0.19} & \textbf{23.34}\tiny{$\pm$0.15} & \textbf{22.93}\tiny{$\pm$0.13}\\
			\hline
			TRADES & \textbf{53.85} & 29.04          & 27.91          & 24.09          & 53.54          & 29.29          & 26.80          & 23.79\\
			CausalAdv-T  & 53.17\tiny{$\pm$0.39} & \textbf{30.66}\tiny{$\pm$0.20} & \textbf{28.57}\tiny{$\pm$0.18} & \textbf{25.74}\tiny{$\pm$0.18} & \textbf{54.79}\tiny{$\pm$0.41} & \textbf{30.81}\tiny{$\pm$0.40} & \textbf{28.51}\tiny{$\pm$0.35} & \textbf{25.32}\tiny{$\pm$0.27}\\
			\hline
		\end{tabular}
	}
\vspace{-2mm}
\end{table}

\textbf{Mitigating robust overfitting.} The recent work \citep{overfitting} first studies the robust overfitting phenomenon. Robust overfitting means that further training will increase the robust training accuracy and test accuracy on natural data after a certain training epoch, but the robust test accuracy will decrease. The robust overfitting phenomenon of Madry \citep{madry} is depicted in Fig. \ref{fig2}. It can be seen that the robust test accuracy of Madry decreases to about $44\%$, while the best robust accuracy of Madry is $51.92\%$. In contrast, the proposed method drastically reduces the difference between the best robust accuracy $54.44\%$ and the robust accuracy $50.93\%$ of the last checkpoint.

\section{Related work}

\textbf{Adversarial attack.}
Unlike the assumption employed in noisy labels~\citep{ln1,ln2,ln3}, adversarial attacks assume that the input determines noise. Since the realization of the adversarial example phenomenon \citep{biggio,2014}, tons of adversarial attacks have been proposed \citep{deepfool,2015,cw,momentum,autozoom,madry,aa}. Among these attacks, FGSM \cite{2015}, PGD attack \citep{madry}, C\&W attack \citep{cw}, and Auto-Attack \citep{aa} are the most commonly used attacks for evaluating robustness.

\textbf{Adversarial defense.} The development of adversarial attacks promotes the progress of adversarial defense and detection. Recent work on improving adversarial robustness mainly falls into two categories: certified defense \citep{certified1,certified2,certified3} and empirical defense and detection with two-sample test \citep{unlabeled1,unlabeled2,free,fast,he,overfitting,detect,mmd}. Detailed discussions of these exciting works can be found in Appendix \ref{sec:adv}.

\textbf{Causal reasoning.} The field of graphical causality, like machine learning, has a long history, see \cite{causality1,causality2,causality3}. One purpose of causal reasoning is to pursue the causal effect of interventions, contributing to achieving the desired objectives. Recent work shows the benefits of introducing causality into machine learning from various aspects \citep{cama,relic,few,hanwang,counter,hanwang2}. More details about relevant works can be found in Appendix \ref{sec:causal}.

\section{Conclusion}
\vspace{-2mm}
In this paper, we provide a novel causality viewpoint for understanding and mitigating adversarial vulnerability. Through constructing the causal graph of the adversarial data generation process and formalizing the intuition of adversarial attacks, we show that the spurious correlation between labels and style variables is important for understanding and mitigating adversarial vulnerability. Inspired by the observation, we propose the adversarial distribution alignment method, which takes the spurious correlation into account for robustness improvement. In addition, we find that the proposed method shares the same spirits to existing adversarial training variants. In future work, we will develop more effective algorithms to leverage or eliminate the spurious correlation between labels and style variables to further improve adversarial robustness. In addition, we will explore the uses of counterfactual statements to explain and mitigate the adversarial vulnerability. In sum, we make a first step towards employing causality to contribute to adversarial learning.

\newpage
\section{ACKNOWLEDGMENT}
We thank the area chair and reviewers for their valuable comments. YGZ and BH were supported by the RGC ECS No. 22200720 and NSFC YSF No. 62006202. YGZ and XMT were supported by NSFC No. 61872329. MMG was supported by Australian Research Council Project DE210101624. TLL was supported by Australian Research Council Projects DE-190101473 and DP-220102121. KZ would like to acknowledge the support by the National Institutes of Health (NIH) under Contract R01HL159805, by the NSF-Convergence Accelerator Track-D 
award \#2134901, and by the United States Air Force under Contract No. FA8650-17-C7715.

\section{Ethics Statement}
This paper does not raise any ethics concerns. This study does not involve any human subjects, practices to data set releases, potentially harmful insights, methodologies and applications, potential conflicts of interest and sponsorship, discrimination/bias/fairness concerns, privacy and security issues, legal compliance, and research integrity issues.

\section{Reproducibility Statement}
To ensure the reproducibility of experimental results, we open source our code \url{https://github.com/YonggangZhangUSTC/CausalAdv.git}.
\newpage
\bibliography{iclr2022_conference}
\bibliographystyle{iclr2022_conference}

\appendix

\section{Derivation of distribution divergence} \label{sec:relaxation}
\yonggang{
Considering that the adversarial distribution often cannot be obtained analytically, we need to relax the distribution divergence in Eq.~\ref{eq:ada0}, i.e., $d\left(P\left(Y|X\right), P_{\bm{\theta}}\left(Y|\tilde{X}\right)\right)$. Without loss of generality, the metric $d$ in Eq. \ref{eq:ada0} can be realized as total variation distance (TVD). Thus, according to the definition of TVD, we have
\begin{flalign} \label{eq:ada01d1}
	d\left(P\left(Y|X\right), P_{\bm{\theta}}\left(Y|\tilde{X}\right)\right) \leq d\left(P_{\bm{\theta}}\left(Y|\tilde{X}\right), Q_{\bm{\theta}}\left(Y|\tilde{X}\right)\right) +
	d\left(P\left(Y|X\right), Q_{\bm{\theta}}\left(Y|X\right)\right),
\end{flalign}
where $Q_{\bm{\theta}}(Y|X)$ is the conditional distribution specified by the classifier $h\left(X;\bm{\theta}\right)$.
According to the Pinsker inequality, i.e., $d\left(P,Q\right) \leq \sqrt{\frac{KL\left(P||Q\right)}{2}}$, where $KL$ is the Kullback-Leibler divergence, we have an upper bound of Eq. \ref{eq:ada01d1}:
\begin{flalign} \label{eq:ada01d2}
	d\left(P\left(Y|X\right), P_{\bm{\theta}}\left(Y|\tilde{X}\right)\right) \leq 
	\sqrt{\frac{KL\left(P_{\bm{\theta}}\left(Y|\tilde{X}\right)|| Q_{\bm{\theta}}\left(Y|\tilde{X}\right)\right)}{2}} + 
	\sqrt{\frac{KL\left(P\left(Y|X\right)|| Q_{\bm{\theta}}\left(Y|X\right)\right)}{2}}.
\end{flalign}
Now, we can employ the upper bound Eq.~\ref{eq:ada01d2} as a surrogate loss. In practice, we can replace $\sqrt{\frac{KL\left(P||Q\right)}{2}}$ with $KL\left(P||Q\right)$. The intuition is that the difference between $\sqrt{\frac{KL\left(P||Q\right)}{2}}$ and $KL\left(P||Q\right)$ is relatively small when $KL\left(P||Q\right)$ is not extremely large, so the replacement will not introduce much difference. Thus, we derive Eq.~\ref{eq:ada01}:
$
	KL\left(P_{\bm{\theta}}\left(Y|\tilde{X}\right)|| Q_{\bm{\theta}}\left(Y|\tilde{X}\right)\right) \nonumber + 
	\gamma KL\left(P\left(Y|X\right) || Q_{\bm{\theta}}\left(Y|X\right)\right),
$
where $\gamma$ is a tunable hyperparameter.
}

\yonggang{Here, we give an intuitive explanation of why optimizing Eq.~\ref{eq:ada01} can make the adversarial distribution similar to the natural distribution. Considering that both the adversarial distribution $P_{\theta}\left(Y|\tilde{X}\right)$ and the anchor $Q_{\theta}\left(Y|\tilde{X}\right)$ are parameterized by $\theta$, so optimizing $\theta$ to minimize $KL\left(P_{\theta}\left(Y|\tilde{X}\right)||Q_{\theta}\left(Y|\tilde{X}\right)\right)$ can force the adversarial distribution be similar to the distribution specified by the classifier $h\left(X;\theta\right)$, i.e., $Q_{\theta}\left(Y|\tilde{X}\right)$. That is, optimizing $Q_{\theta}\left(Y|\tilde{X}\right)$ will also change the adversarial distribution $P_{\theta}\left(Y|\tilde{X}\right)$. In addition, minimizing the divergence $KL\left(P\left(Y|X\right)||Q_{\theta}\left(Y|\tilde{X}\right)\right)$ can endow the classifier with the ability to provide a good prediction performance. Thus, minimizing $KL\left(P_{\bm{\theta}}\left(Y|\tilde{X}\right)||Q_{\bm{\theta}}\left(Y|\tilde{X}\right)\right)$ will make the adversarial distribution be similar with the natural distribution.}

\yonggang{To verify the replacement will not introduce much difference, we evaluate the robustness of CausalAdv-M and CausalAdv-T on the CIFAR-10 dataset with two losses, i.e., Eq.~\ref{eq:ada01} and Eq.~\ref{eq:ada01d2}. The results evaluated on best checkpoints are shown in Table~\ref{tabled}.}

\yonggang{
\begin{table}[h]
	\caption{\yonggang{Classification accuracy (\%) on CIFAR-10 under the white-box threat model with $\epsilon=8/255$. Here, ``*'' means that Eq.~\ref{eq:ada01d2} is used for optimization.}}
	\label{tabled}
	\centering
	\begin{tabular}{lllll}
	\hline
	& Natural & FGSM  & PGD20 & CW20  \\ \hline
	Madry  & 83.56   & 56.69 & 51.92 & 51.00 \\
	CausalAdv-M  & 80.42   & 57.98 & 54.44 & 52.51 \\
	CausalAdv-M* & 80.03   & 57.47 & 52.98 & 52.72 \\ \hline
	TRADES & 81.39   & 57.25 & 53.64 & 51.39 \\
	CausalAdv-T  & 81.22   & 58.97 & 54.55 & 52.95 \\
	CausalAdv-T* & 79.94   & 58.46 & 54.12 & 52.28 \\ \hline
\end{tabular}
\end{table}
}

\section{Calculation of the expectation on the style information} \label{sec:calculation}
We provide details of calculating $\mathbb{E}_{\hat{s}\left(\tilde{X}\right) \sim \mathcal{N} \left( \mu\left( \tilde{X}\right), \sigma^{2}\bm{I}\right)} CE\left( g\left( \hat{s}\left( \tilde{X} \right) ; W_{g} \right), Y \right)$. We assume a normal distribution for the styles, i.e., $\hat{s}\left(\tilde{X}\right) \sim \mathcal{N}\left(\mu\left(\tilde{X}\right), \sigma^{2}\bm{I}\right)$. According to the definition of the cross-entropy loss, for a input pair $\left(x, y\right)$ we have:

\begin{equation}
	\begin{aligned}
		&\mathbb{E}_{\hat{s}\left(x\right) \sim \mathcal{N} \left( \mu\left( x\right), \sigma^{2}\bm{I}\right)} 
		CE\left( g\left( \hat{s}\left( x \right) ; W_{g} \right), y \right) \\&
		= 
		\mathbb{E}_{\hat{\bm{s}}\left(\bm{x}\right) \sim \mathcal{N} \left( \mu\left( \bm{x}\right), \sigma^{2}\bm{I}\right)}
		\log \frac{1}{P\left(Y=y| g\left( \hat{\bm{s}}\left( \bm{x} \right) ; W_{g} \right)\right)} \\&
		= 
		\mathbb{E}_{\hat{\bm{s}}\left(\bm{x}\right) \sim \mathcal{N} \left( \mu\left( \bm{x}\right), \sigma^{2}\bm{I}\right)}
		\log \frac{1}{
			\frac{e^{W_{g,y}^{\top}\hat{\bm{s}}\left(\bm{x}\right)}}{\sum_{j} e^{W_{g,j}^{\top}\hat{\bm{s}}\left(\bm{x}\right)}}}
		\\&
		= 
		\mathbb{E}_{\hat{\bm{s}}\left(\bm{x}\right) \sim \mathcal{N} \left( \mu\left( \bm{x}\right), \sigma^{2}\bm{I}\right)}
		\log \sum_{j} e^{(W_{g,j} - W_{g,y})^{\top} \hat{\bm{s}}\left(\bm{x}\right)}
		\\&
		\leq
		\log 
		\mathbb{E}_{\hat{s}\left(x\right) \sim \mathcal{N} \left( \mu\left( x\right), \sigma^{2}\bm{I}\right)}
		\sum_{j} e^{(W_{g,j} - W_{g,y})^{\top} \hat{s}\left(x\right)}    \\&
		= 
		\log
		\sum_{j} e^{(W_{g,j} - W_{g,y})^{\top} \mu\left( x\right) + 
			\frac{1}{2}	
			(W_{g,j} - W_{g,y})^{\top}\sigma^{2}\bm{I}(W_{g,j} - W_{g,y})}  \\&
		= 
		\log
		\frac{\sum_{j} e^{W_{g,j}^{\top} \mu\left( x\right) + 
				\frac{\sigma^{2}}{2}	
				(W_{g,j} - W_{g,y})^{\top}(W_{g,j} - W_{g,y})}}{
			W_{g,y}^{\top} \mu\left( x\right)
		}   \\&
	\triangleq
	\log
	\frac{1}{
		P\left((Y=y |\overline{g}\left(\mu\left( x\right);W_{g}\right)\right)   
	} \\&
    \triangleq
    CE\left(\overline{g}\left(\mu\left( x\right);W_{g}\right), y \right),
	\end{aligned}
\end{equation}
where the inequality follows from the Jensen's inequality: $\mathbb{E} \log (X) \leq \log \mathbb{E} X$, the expectation is calculated by leveraging the moment-generating function:
\begin{equation}
	\mathbb{E} e^{tX} = e^{t\mu + \frac{1}{2}\sigma^{2}t^{2}}, X \sim \mathcal{N}(\mu, \sigma^{2}).
\end{equation}
Note that, we define the function $\overline{g}\left(\mu\left( x\right);W_{g}\right)$ for simplicity: 
\begin{equation}
	P\left((Y=y |\overline{g}\left(\mu\left( x\right);W_{g}\right)\right) 
	\triangleq 
	\frac{W_{g,y}^{\top} \mu\left( x\right)}
	{\sum_{j=1} e^{W_{g,j}^{\top} \mu\left( x\right) + 
		\frac{\sigma^{2}}{2}	
		(W_{g,j} - W_{g,y})^{\top}(W_{g,j} - W_{g,y})}}.
\end{equation}

\yonggang{Besides seting $g$ to a linear model, non-linear models, e.g., neural networks, can also be employed in practice. To verify the influence introduced by selecting different instantiation of model $g$, we compare different realizations of mode $g$, i.e., linear mapping and non-linear neural networks. The following results suggest that our method is relatively robust to the selection of model $g$. Note that, all these results are evaluated on the best checkpoint models trained on CIFAR-10 dataset.}

\begin{table}[h]
	\caption{\yonggang{Classification accuracy (\%) on CIFAR-10 under the white-box threat model with $\epsilon=8/255$. Here, ``+ Linear'' means that model $g$ is instantiated with a linear mapping, and ``+ Non-linear'' means that model $g$ is instantiated with a neural network.}}
	\label{tableg}
	\centering
	\begin{tabular}{lllll}
		\hline
		& Natural & FGSM  & PGD20 & CW20  \\ \hline
		Madry              & 83.56   & 56.69 & 51.92 & 51.00 \\
		CausalAdv-M + Linear     & 80.68   & 57.18 & 53.36 & 51.41 \\
		CausalAdv-M + Non-linear & 80.42   & 57.98 & 54.44 & 52.51 \\ \hline
		TRADES             & 81.39   & 57.25 & 53.64 & 51.39 \\
		CausalAdv-T + Linear     & 80.31   & 58.43 & 54.31 & 52.25 \\
		CausalAdv-T + Non-linear & 81.22   & 58.97 & 54.55 & 52.95 \\ \hline
	\end{tabular}
\end{table}

\section{Relationship between orthogonality and statistical independence} \label{sec:therom}
We give the proof for the following lemma in Sec. 3.3. Note that, we use $R$ to present the learned representation of $X$, and replace $X$ with $R$ for simplicity.

\begin{lemma} \label{lemma}
	$R \in \mathbb{R}^{d}$ is the learned representation, where $d$ is the number of dimension of $R$. Assume that $R$ is a normal distribution with mean $m$ and covariance matrix $M$. The content used for predicting labels, i.e., logits, is obtained by applying a linear functions to $R$, i.e.,  $\hat{c}\left(R\right) = W_{c} R$, where $W_{c}$ are parameters used for mapping $R$ to logits. The style is modeled by a normal distribution, i.e., $\hat{s}\left(R\right) = \mu \left(R;W_{s}\right) + \Sigma \left( Y \right) ^{\frac{1}{2}} \bm{n}$, where $W_{s}$ presents parameters for modeling the mean of styles, and $\bm{n}$ is sampled from a standard normal distribution. Assume that $\mu \left(R;W_{s}\right)$ is a linear function, i.e., $\hat{s}\left(R\right) = W_{s}R + \Sigma \left( Y \right) ^{\frac{1}{2}} \bm{n} $. Then, setting $W_{s}$ as an instantiate of the orthogonal complement of $W_{c}$ leads to statistical independence, i.e., $\hat{c}\left(R\right) \Vbar \hat{s}\left(R\right)$. Here, $\Vbar$ denotes the statistical independence, and we define $\langle a, b \rangle_{M}=\langle a, M b \rangle$ for a given semi-definite matrix $M$. The orthogonality $A \perp_{M} B$ of two subspaces $A$ and $B$ is defined likewise.
\end{lemma}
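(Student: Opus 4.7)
The plan is to exploit joint Gaussianity of $(\hat{c}(R), \hat{s}(R))$ so that the independence claim reduces to a cross-covariance computation, and then to unpack the subspace orthogonality hypothesis into a matrix equation.

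First, I would observe that $(R, \bm{n})$ is jointly Gaussian, since $R \sim \mathcal{N}(m, M)$, $\bm{n}$ is standard normal, and $\bm{n}$ is independent of $R$. The pair $(\hat{c}(R), \hat{s}(R)) = (W_c R,\, W_s R + \Sigma(Y)^{1/2} \bm{n})$ is a deterministic linear image of $(R, \bm{n})$ (treating $\Sigma(Y)^{1/2}$ as a fixed matrix, e.g.\ by conditioning on $Y$), hence jointly Gaussian as well. For jointly Gaussian random vectors, statistical independence is equivalent to the vanishing of the cross-covariance, so it suffices to show $\Cov(\hat{c}(R), \hat{s}(R)) = \bm{0}$.

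Second, I would compute this cross-covariance directly. Since $\bm{n}$ is independent of $R$, the noise term contributes nothing, and
$$\Cov\bigl(W_c R,\; W_s R + \Sigma(Y)^{1/2}\bm{n}\bigr) = W_c\,\Cov(R,R)\,W_s^\top = W_c M W_s^\top.$$
Thus the claim reduces to the matrix identity $W_c M W_s^\top = \bm{0}$.

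Third, I would translate the subspace orthogonality hypothesis into exactly this identity. The standard orthogonal complement of $\ker(W_c)$ is the row space $\mathrm{range}(W_c^\top)$, and likewise for $W_s$. By definition of $\langle \cdot,\cdot \rangle_M$, the hypothesis $\ker(W_c)^\perp \perp_M \ker(W_s)^\perp$ means $\langle W_c^\top u,\, W_s^\top v \rangle_M = u^\top W_c M W_s^\top v = 0$ for all vectors $u, v$, which is equivalent to $W_c M W_s^\top = \bm{0}$. Combining with the previous step yields $\hat{c}(R) \Vbar \hat{s}(R)$. The main obstacle I anticipate is purely notational: carefully unpacking the condition $\ker(W_c)^\perp \perp_M \ker(W_s)^\perp$, where the inner $\perp$ is standard but the outer is $M$-weighted, and confirming that it is equivalent to $W_c M W_s^\top = 0$ rather than a conjugated variant such as $W_c W_s^\top = 0$ or $W_c M^{-1} W_s^\top = 0$. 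Beyond this bookkeeping, the remainder is routine Gaussian linear algebra.
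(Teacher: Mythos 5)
Your proposal is correct and follows essentially the same route as the paper's proof: unpack $\ker(W_c)^\perp \perp_M \ker(W_s)^\perp$ into the matrix identity $W_s M W_c^\top = 0$, identify this with the cross-covariance of $\hat{c}(R)$ and $\hat{s}(R)$ (the noise term dropping out by independence), and conclude independence from joint Gaussianity. If anything, you are slightly more careful than the paper in explicitly stating that zero cross-covariance implies independence only because $(\hat{c}(R),\hat{s}(R))$ is jointly Gaussian, and in flagging the need to condition on $Y$ when treating $\Sigma(Y)^{1/2}$ as fixed.
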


\begin{proof}
	Under the assumption in Lemma \ref{lemma}, setting $W_{s}$ as an instantiate of the orthogonal complement of $W_{c}$, we have:
	\begin{equation}
		\begin{aligned}
			&\text{ker}(W_{s})^{\perp} \perp_{M} \text{ker}(W_{c})^{\perp} \Longleftrightarrow 
			\text{im}(W_{s}^{\top}) \perp_{M} \text{im}(W_{c}^{\top})
			\Longleftrightarrow
			\langle W_{s}^{\top} \bm{a}, W_{c}^{\top} \bm{b} \rangle_{M} = 0 \forall \bm{a}, \bm{b} \\&
			\Longleftrightarrow
			\langle W_{s}^{\top} \bm{a}, M W_{c}^{\top} \bm{b} \rangle = 0 \forall \bm{a}, \bm{b} 
			\Longleftrightarrow
			W_{s} M W_{c}^{\top} = 0
			\Longleftrightarrow
			\mathbb{E}_{R} W_{s}\left(R - m\right)\left(R - m\right)^{\top} W_{c}^{\top}=0 \\&
			\Longleftrightarrow 
			\mathbb{E}_{R,\bm{n}} W_{s}\left(R + \Sigma^{\frac{1}{2}} \bm{n} - m\right) \left(R - m\right)^{\top} W_{c}^{\top}=0
			\Longleftrightarrow 
			Cov\left(\hat{s}\left(R\right), \hat{c}\left(R\right)\right)=0  \\&
			\Longleftrightarrow 
			\hat{c}\left(R\right)  \Vbar \hat{s} \left(R\right) 
		\end{aligned}
	\end{equation}
\end{proof}

\section{More details about evaluation metrics and training details} \label{sec:settings}
\textbf{Evaluation metrics.} For MNIST dataset, we set the maximum perturbation bound $\epsilon=0.3$, perturbation step size $\eta=0.01$, and the number of iterations $K=40$ for PGD and C\&W attacks, which keeps the same as \citep{trades}. Following \citep{overfitting}, we set perturbation bound $\epsilon=8/255$, perturbation step size $\eta=\epsilon / 10$, and the number of iterations $K=20$ for CIFAR-10 dataset. 

\textbf{Training details.} For MNIST, we use the same CNN architecture as \citep{cw,trades}. Following \citep{trades}, the network is trained using SGD with 0.9 momentum for 50 epochs with an initial learning rate 0.01, and the batch size is set to 128. Hyper-parameters used to craft adversarial examples for training are the same as those used for evaluation. These two networks share the same hyper-parameters: we use SGD with $0.9$ momentum, weight decay $2\times 10^{-4}$, batch size $128$, and an initial learning rate of $0.1$. The maximum epoch is $120$, and the learning rate is divided by 10 at epoch 60 and 90, respectively. To generate adversarial examples for training, we set the maximal perturbation $\epsilon=8/255$, the perturbation step size $ \eta = 2/255$, and the number of iterations $K= 10$, which is the same as \citep{overfitting}. \yonggang{In all of our experiments $\beta$ is set to 1.0. For CausalAdv-M, $\lambda$ is set to $1.0$ and $0.5$ for CIFAR-10 and CIFAR-100 datasets, respectively. For CausalAdv-T, $\lambda$ is set to $0.5$ and $1.0$ for CIFAR-10 and CIFAR-100 datasets, respectively.}

\section{Experiments of WRN-34-10 on CIFAR-10} \label{sec:experiments}
\begin{table}[h]
	\caption{Classification accuracy (\%) of WRN-34-10 on CIFAR-10 under the white-box threat model. The best-performance model and the corresponding accuracy are highlighted.} \label{a1}
	\small
	\centering
	\begin{tabular}{c|llll|llll}
		\hline
		\multicolumn{1}{l|}{\multirow{2}{*}{Method}} & \multicolumn{4}{c|}{Best checkpoint}                                                                             & \multicolumn{4}{c}{Last checkpoint}                                                                            \\
		\multicolumn{1}{l|}{}                        & \multicolumn{1}{c}{Natural} & \multicolumn{1}{c}{FGSM} & \multicolumn{1}{c}{PGD-20} & \multicolumn{1}{c|}{CW-20} & \multicolumn{1}{c}{Natural} & \multicolumn{1}{c}{FGSM} & \multicolumn{1}{c}{PGD-20} & \multicolumn{1}{c}{CW-20} 
		\\ \hline
		Madry& \textbf{86.63} & 59.48 & 53.65 & 53.58 & \textbf{86.60} & 57.07  &  49.23  &  49.46\\
		CausalAdv-M& 85.24 & \textbf{61.22} & \textbf{55.17} & \textbf{55.68} & 85.61 & \textbf{60.08} &  \textbf{51.76} & \textbf{52.59}\\
		\hline
		TRADES& \textbf{84.32} & 60.94 & 56.69 & 54.87 & \textbf{84.86} & 59.94  &  52.04  &  52.39\\
		CausalAdv-T& 84.19 & \textbf{61.62} & \textbf{57.36} & \textbf{55.75} & 84.35 & \textbf{61.57}  & \textbf{55.15} & \textbf{55.23}\\
		\hline
	\end{tabular}
\end{table}

In Table \ref{a1}, we report the accuracy of WRN-34-10 \citep{wrn} of Madry, TRADES, and the proposed method on CIFAR-10 against various attacks, i.e., FGSM, PGD, and C\&W attacks, which are widely used in the literature. Here, “Natural” denotes the accuracy of natural test images. We denote by PGD-20 the PGD attack with $20$ iterations for generating adversarial examples, which also applies to the C\&W attack. We can see that the proposed method achieves the best robustness against all three types of attacks, demonstrating that taking into account the spurious correlation can significantly improve the adversarial robustness. Note that the standard deviations of $5$ runs are omitted, because they hardly affect the results. 

\section{Ablation study} \label{sec:ablation}
\begin{table}[h]
	\caption{Robust accuracy (\%) of ResNet-18 on CIFAR-10 and CIFAR-100 under the white-box threat model. For simplicity, we use $t_{1}$, $t_{2}$, and $t_{3}$ to represent the first, second, and third terms in Eq. \ref{eq:final}, respectively. The best-performance model and the corresponding accuracy are highlighted.} \label{table6}
	\centering
	\begin{tabular}{c|ccc|ccc|ccc} 
		\hline
		\multirow{2}{*}{Method} & \multirow{2}{*}{$t_{1}$} & \multirow{2}{*}{$t_{2}$} & \multirow{2}{*}{$t_{3}$} & \multicolumn{3}{c|}{CIFAR-10}                                     & \multicolumn{3}{c}{CIFAR-100}                                     \\
		&      &          &        & FGSM      & PGD-20    & CW-20      & FGSM    & PGD-20      & CW-20     \\ \hline
		Madry         & $\checkmark$      &            &        & 56.69          & 51.92          & 51.00          &  56.69          & 51.92          & 51.00          \\
		CausalAdv-M     & $\checkmark$      &        & $\checkmark$     & 57.98     & 54.44          & 52.51    & 57.98          & 54.44          & 52.51          \\
		TRADES & $\checkmark$  & $\checkmark$     &    & 57.25          & 53.64          & 51.39  & 57.25          & 53.64          & 51.39          \\
		CausalAdv-T   & $\checkmark$   & $\checkmark$    & $\checkmark$   & \textbf{58.97} & \textbf{54.55} & \textbf{52.95}     & \textbf{58.97} & \textbf{54.55} & \textbf{52.95} \\ \hline
	\end{tabular}
	 \label{table5}
\end{table}

We implicitly conducted ablation studies when designing Table 1, Table 2, and Table 3. To further understand the comparative effects of different terms of the proposed method, we reorganize the robust accuracy of the best checkpoint trained on CIFAR-10 and CIFAR-100 in Table \ref{table5}. Comparing Madry, TRADES, and CausalAdv-M, we find that introducing the second ($t_{2}$) and the third term ($t_{3}$) can improve the robustness and that the effect of these two terms is close. Similarly, comparing TRADES and CausalAdv-T, we see that introducing the third term ($t_{3}$) can further improve the robustness.

\yonggang{To further analyze the superiority of our method, we compare CausalAdv-M with Madry to explore which kind of adversarial examples that CausalAdv-M is more robust to. These analyses are based on the spurious perspective, as one main conclusion of our paper is that the origin of adversarial vulnerability is the excessive focus of DNNs on spurious correlations between labels and style variables. Specifically, we calculate the KL-divergence between $KL\left(P\left(Y|h\left(x,s\right) \right) || P\left(Y| h\left(x_{adv},s\right) \right) \right)$ for each input $x$, and divide samples into several (10 in our experiment) bins according to the KL-divergence. Then, we evaluate the robust accuracy of models trained with CausalAdv-M and Madry in each bin sample, and results are depicted in Fig.~\ref{bar}.}
\begin{figure*}[h] 
	\centering
	\includegraphics[width=0.55\linewidth]{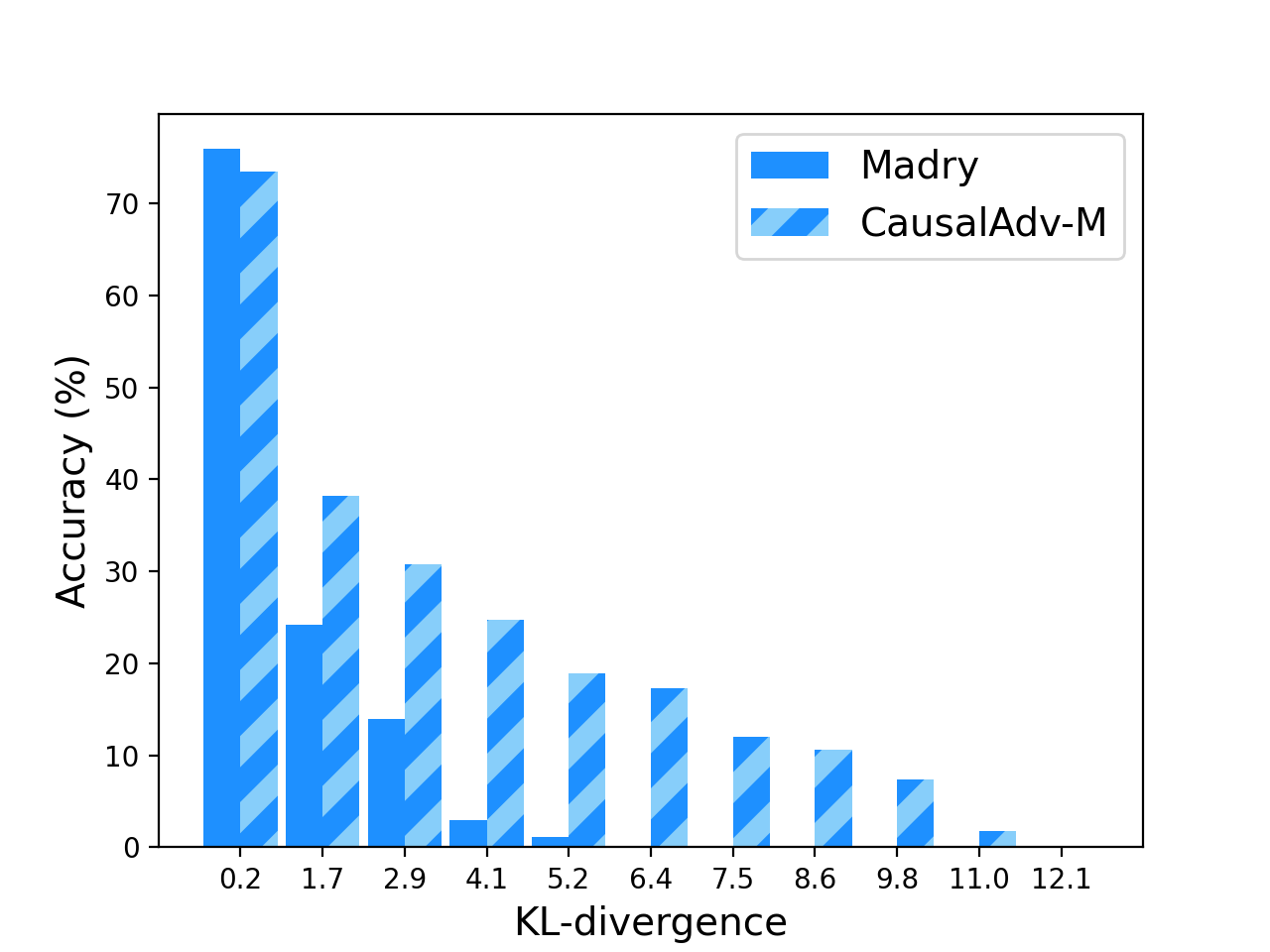}
	\caption{
	\yonggang{Classification accuracy (\%) on CIFAR-10 under the PGD attack with with $\epsilon=8/255$.}
	}
	\label{bar}
\end{figure*}
\yonggang{We can see that CausalAdv-M is more robust to samples leading to large KL-divergence than Madry. According to these empirical results, we can conclude that the proposed method is more robust to samples causing a significant difference between natural and adversarial distributions.}

\yonggang{To further verify that CausalAdv-T can outperform TRADES interms of both the natural and adversairal robustness, we compare the robust accuracy of CausalAdv-T with that of TRADES on CIFAR-10 and CIFAR-100 dataset. The results evaluated on the best checkpoint models are shown in Table~\ref{tableT}.}
\begin{table}[h]
	\caption{
	\yonggang{Robust accuracy (\%) of ResNet-18 on CIFAR-10 and CIFAR-100 under the white-box threat model.}
	}
	\centering
\begin{tabular}{c|cccc|cccc}
	\hline
	& \multicolumn{4}{c|}{CIFAR-10}                                      & \multicolumn{4}{c}{CIFAR-100}                                      \\
	& Natural        & FGSM           & PGD20          & CW20           & Natural        & FGSM           & PGD20          & CW20           \\ \hline
	TRADES & 81.39          & 57.25          & 53.64          & 51.39          & 53.85          & 29.04          & 27.91          & 24.09          \\
	CausalAdv-T  & \textbf{81.72} & \textbf{58.26} & \textbf{54.06} & \textbf{51.90} & \textbf{53.91} & \textbf{30.19} & \textbf{28.11} & \textbf{24.90} \\ \hline
\end{tabular}
\label{tableT}
\end{table}
\yonggang{
We can see that CausalAdv-T can improve both the natural and the adversarial accuracy.
}

\yonggang{To explore the sensitivity of our method on hyperparameters, we evaluate our methods on CIFAR10 and CIFAR100 datasets. The results are organized in Table~\ref{sensitive}.}
\begin{table}[h]
	\caption{
	\yonggang{Robust accuracy (\%) of ResNet-18 on CIFAR-10 and CIFAR-100 under the white-box threat model.}
	}
	\small
	\centering
\begin{tabular}{c|cccc|cccc}
	\hline
	& \multicolumn{4}{c|}{CIFAR10}    & \multicolumn{4}{c}{CIFAR100}    \\
	& Natural & FGSM  & PGD20 & CW20  & Natural & FGSM  & PGD20 & CW20  \\ \hline
	Madry            & 83.56   & 56.69 & 51.92 & 51.00 & 55.98   & 28.39 & 25.15 & 24.04 \\
	CausalAdv-M $\lambda=0.2$ & 82.17   & 57.13 & 53.28 & 51.80 & 54.68   & 29.85 & 27.32 & 25.88 \\
	CausalAdv-M $\lambda=0.5$ & 80.83   & 57.28 & 53.65 & 51.97 & 54.07   & 29.76 & 27.62 & 25.44 \\
	CausalAdv-M $\lambda=1.0$ & 80.42   & 57.98 & 54.44 & 52.51 & 52.69   & 29.77 & 27.80 & 26.04 \\
	CausalAdv-M $\lambda=1.5$ & 80.17   & 57.53 & 53.24 & 51.40 & 52.37   & 30.74 & 28.49 & 26.71 \\
	CausalAdv-M $\lambda=2.0$ & 80.35   & 58.29 & 53.16 & 52.71 & 48.40   & 30.07 & 28.52 & 25.97 \\ \hline
	TRADES           & 81.39   & 57.25 & 53.64 & 51.39 & 53.85   & 29.04 & 27.91 & 24.09 \\
	CausalAdv-T $\lambda=0.2$   & 81.72   & 58.26 & 54.06 & 51.90 & 53.83   & 29.76 & 28.05 & 24.49 \\
	CausalAdv-T $\lambda=0.5$   & 81.22   & 58.97 & 54.55 & 52.95 & 53.91   & 30.19 & 28.11 & 24.90 \\
	CausalAdv-T $\lambda=1.0$   & 79.65   & 58.48 & 54.45 & 52.89 & 53.17   & 30.66 & 28.57 & 25.74 \\
	CausalAdv-T $\lambda=1.5$   & 78.09   & 57.42 & 53.66 & 51.38 & 52.22   & 30.49 & 28.17 & 25.48 \\
	CausalAdv-T $\lambda=2.0$   & 74.42   & 55.29 & 52.07 & 50.04 & 50.40   & 30.22 & 28.14 & 25.49 \\ \hline
\end{tabular}
\label{sensitive}
\end{table}
\yonggang{
Results in Table~\ref{sensitive} demonstrate that both CausalAdv-M and CausalAdv-T are relatively insensitive to the hyperparameters.}

\section{More details about adversarial learning} \label{sec:adv}
Recent work on improving adversarial robustness mainly falls into two categories: certified defense and empirical methods.

Certified defense \citep{certified1,certified2,certified3} aims to endow the model with provably adversarial robustness against norm-bounded perturbations. Although the certified defense strategy is promising, the empirical defense \citep{2015,madry,trades,he,certified2,denoising,menet}, especially the adversarial training method \citep{2015,madry,trades}, is currently the most effective strategy. Empirical defense firstly generates adversarial examples using a certain adversarial attack, then incorporates the generated adversarial examples into the training process. Recently, an empirical detection strategy is to utilize a two-sample test to detect adversarial examples~\citep{mmd}. In the following, we mainly discuss defense strategies, as the detection approach is not the main focus of this paper.

Recently, various efforts \citep{unlabeled1,unlabeled2,free,fast,he,overfitting} have been devoted to improving adversarial training. One line of work focuses on accelerating the training procedure \citep{free,fast}. Another line of research \citep{unlabeled1,unlabeled2} shows a promising direction that unlabeled training data can significantly mitigate the adversarial vulnerability. Lastly, recent work \citep{he,overfitting} provides an interesting direction where these methods rethink the adversarial training from a exciting aspect, i.e., rethinking the role of normalization \citep{he} and basic training strategies \citep{overfitting}. However, all these methods overlook the spurious correlation between labels and the style information. 

Another related work is \citep{bug}, which provides an interesting viewpoint, i.e., adversarial examples can be viewed as a human phenomenon because the model’s reliance on useful but not robust features leads to adversarial vulnerability. Our work gives a new causal perspective of adversarial vulnerability. Specifically, a) \citep{bug} found some features were useful but not robust, while our work explores the phenomenon's fundamental cause and provides a clear explanation of why some features are useful but not robust: Given $X$, labels $Y$ are spuriously correlated with the style variables, so fitting the spurious correlation can predict labels. Thus, the style variables can be viewed as ‘features’; b) \citep{bug} claimed that adversarial examples could be viewed as a human phenomenon, while our work shows that adversarial examples can be viewed as a model phenomenon rather than merely a human phenomenon. Specifically, the adversarial vulnerability results from fitting the correlation between labels and style variables and failing to fit the causal relations, i.e., DNNs fail to extract content variables.

\section{More details about causal reasoning} \label{sec:causal}
The most relevant work is CAMA \citep{cama} that aims to improve the robustness of DNNs on unseen perturbation via explicitly modeling the perturbation from a causal view. The main difference between our method and CAMA is that we focus on the adversarial vulnerability while CAMA aims to improve the robustness of unseen perturbations. In addition, CAMA assumes a hard intervention on a latent variable. It promotes robustness via modeling the perturbation in the latent space. In this paper, we employ a soft intervention and propose to penalize DNNs when the adversarial distribution is different from the natural distribution. \yonggang{A recent work~\citep{r2} also aims to connect robust learning and causality, but the main focus of~\citep{r2} is on out-of-distribution generalization, which is different from adversarial learning.} 

Another related work is RELIC \citep{relic}, a regularizer used in self-supervised learning that uses the independence of mechanisms \citep{causality3} and encourages DNNs to be invariant to different augmentations of the same instance. The self-supervised learning method \citep{relic} also constructs a causal graph to model the data generation process, but the focus of RELIC is on the content invariant property, overlooking the importance of style information. One concurrent work~\citep{hanwang2} propose using the instrumental variable to perform causal intervention, based on a strong assumption that the adversarial vulnerability results from the confounding effect. In contrast, merely some general assumptions are required in this paper, e.g., causal model assumption. \yonggang{Although~\citep{r3} takes spurious correlations into account, \citep{r3} proposes using prior knowledge to group the training data to avoid the reliance on spurious correlations. Difference from~\citep{r3}, our method does not rely on prior knowledge. Another related work is CORE~\citep{r1}, a regularizer inspired by a causal graph is proposed to minimizing the variance of prediction and loss condition on label and ID information to mitigate the influence of domain shift. Different from~\citep{r1}, our method is designed to eliminate the difference between the natural and adversarial distributions.} 

\end{document}